
\documentclass[letterpaper, 10 pt, conference]{ieeeconf}  

\IEEEoverridecommandlockouts                              

\overrideIEEEmargins                                      


\usepackage{graphics} 
\usepackage{epsfig} 
\usepackage{mathptmx} 
\usepackage{times} 
\usepackage{amsmath} 
\usepackage{amssymb}  
\usepackage{todonotes}
\usepackage{color}
\usepackage{romannum}
\usepackage{savesym} \savesymbol{AND}
\usepackage[noadjust]{cite}
\usepackage{hyperref}
 
\setlength{\textfloatsep}{2pt}
\setlength{\floatsep}{5pt}

\newtheorem{theorem}{Theorem}





\newcommand{\vlambda}{\mbox{\boldmath $\lambda$}}

\newcommand{\vtau}{\mbox{\boldmath $\tau$}}





\newcommand{\vd}{\mathbf d}

\newcommand{\vq}{\mathbf q}

\newcommand{\vx}{\mathbf x}



\newcommand{\vC}{\mathbf C}

\newcommand{\vF}{\mathbf F}
\newcommand{\vG}{\mathbf G}

\newcommand{\vI}{\mathbf I}
\newcommand{\vJ}{\mathbf J}

\newcommand{\vM}{\mathbf M}

\newcommand{\vS}{\mathbf S}

\newcommand{\vW}{\mathbf W}





\title{\LARGE \bf
Robust Whole-Body Motion Control of Legged Robots
}

\author{Farbod Farshidian, Edo Jelavi\'c, Alexander W. Winkler, Jonas Buchli  
\thanks{$^*$All authors are with the Agile \& Dexterous Robotics Lab, ETH Z\"urich, Switzerland, email: \{farbodf, jelavice, winklera, buchlij\}@ethz.ch}%
}

\begin{document}

\maketitle
\thispagestyle{empty}
\pagestyle{empty}

\begin{abstract}
We introduce a robust control architecture for the whole-body motion control of torque controlled robots with arms and legs. The method is based on the robust control of contact forces in order to track a planned Center of Mass trajectory. Its appeal lies in the ability to guarantee robust stability and performance despite rigid body model mismatch, actuator dynamics, delays, contact surface stiffness, and unobserved ground profiles. Furthermore, we introduce a task space decomposition approach which removes the coupling effects between contact force controller and the other non-contact controllers. Finally, we verify our control performance on a quadruped robot and compare its performance to a standard inverse dynamics approach on hardware.
\end{abstract}

\section{INTRODUCTION}
The robustness of motion controllers for legged robots has recently attracted a significant research interest. Generally in any robotic application, a generic motion control requires three main components namely, (i) trajectory planning which is responsible for generating reference trajectories, (ii) a state observer to estimate the state of the robot and (iii) a stabilizing feedback controller which is responsible for tracking the reference trajectories and rejecting disturbances. For any successful hardware implementation, the robustness should be considered in designing each of these three modules. Here, we focus on the robustness of the stabilizing controller. 

Task space inverse dynamics is a popular approach for whole--body control of legged robots given that the reference trajectory has already been calculated \cite{righetti2011, winkler17}. Hierarchical inverse dynamics based on cascades of quadratic programs has been proposed for the control of legged robots and implemented on torque controlled legged robots \cite{hutter2013, herzog2016}. Inverse kinematics based approaches with constraints and hierarchical control approach such as \cite{mistry2007task} have also been used for the control of legged robots. All of these methods are in general a versatile tool for implementing already existing reference trajectories. However, the main limitation of these methods is their assumption and reliance on ideal force tracking actuators. Thus, they are often susceptible to the actuators dynamics, model uncertainties, and delays. 

In order to increase the robustness of a motion controller, different schemes have been proposed. A robust inverse dynamics method is proposed by \cite{del2016, nguyen2015} which makes the task constraints less susceptible to the additive uncertainty in joint torques. In \cite{kelly1989, lee2016}, an adaptive control method has been used to estimate or improve the rigid body model in order to adapt the controller. Task space robustness of manipulator's motion has been considered by \cite{becker03, cheah03} where the proposed method guarantees stability in the presence of rigid body model mismatch and actuator dynamics.   
While torque control algorithms with high-performance \cite{boaventura12} can reinforce the perfect actuator assumption to a certain extent, their performance is still not ideal. In this paper, we propose a feedback control structure which can increase the robustness of the controller in face of actuator dynamics, finite contact surface stiffness, and unobserved ground profile. We assume that the objective of the whole-body stabilizing controller is to track planned trajectories of the Center of Mass (CoM). Considering that the CoM motion is only affected by the external forces, a robust implementation of the given plan can be achieved by robustly controlling the robot's interaction ports (e.g. footsteps or contact forces) with the environment.

The main contributions of this work are as follows: First, we introduce an Internal Model Control (IMC) structure to feedback control the end-effector (e.g. stance leg) contact force by considering the actuator dynamics and delays. Then, we demonstrate that we can achieve the robust stability and performance of our IMC contact force controller with the proper tuning of IMC; the tuning boils down to the selection of a single parameter. Furthermore, in order to decouple the contact force controller and the non-contact end-effector controller (e.g. swing leg), we introduce a general approach for either hierarchical or co-equal task space decomposition into subsystems. While this approach is formulated for fully-actuated legged robots extension to under-actuated cases is conceivable. Finally, we discuss the experimental results on our quadrupedal robot, HyQ. and compare it to a standard whole-body control method of legged robot used in \cite{winkler17}.

\section{Background}
In this section, we briefly introduce the task space modeling approach and model-based feedback force control structure which we will use later in our proposed method. 
 
\subsection{Task Space}
Assuming rigid body modelling, we can write the equations of motion of a robot as
\begin{align}
&\vM(\vq) \ddot{\vq} + \vC(\vq,\dot{\vq}) + \vG(\vq) = \vS^\top \vtau + \vJ^\top_{c_\bot} \vlambda_{c_\bot} + \vJ^\top_{c_\parallel} \vlambda_{c_\parallel}  \notag \\
&\vJ_{c_\parallel} \dot{\vq} = 0,  \quad \vJ_{c_\bot} \dot{\vq} = 0,  \quad \vlambda_{\bot} \geq 0
\label{eq:rigidbody_model}
\end{align} 
where $\vM$, $\vC$, $\vG$, and $\vS$ are respectively the generalized inertia matrix, Coriolis-centrifugal forces, generalized gravity forces, and selection matrix. $\vq$ is the generalized coordinate vector which is augmented by the floating base pose coordinate. $\vlambda_{c_\bot}$ and $\vlambda_{c_\parallel}$ are the orthogonal and tangential contact forces. Their corresponding Jacobian are $\vJ_{c_\bot}$ and $\vJ_{c_\parallel}$ respectively and the equality and the inequality equations are the constraints that are introduced through contacts. The task space dynamics can be obtained by mapping the whole-body model in Equation~\eqref{eq:rigidbody_model} to the range space of the task Jacobian matrix, $\vJ_x$,  \cite{khatib87}. Thus, we can write
\begin{equation}  \label{eq:task_space_equation}
\vM_{x}(\vq) \ddot{\vx} + \vC_{x}(\vq,\dot{\vq}) + \vG_{x}(\vq) = \vS_{x}^\top \vtau +  \vJ_{c,x}^\top \vlambda ,
\end{equation}
where we have 
\begin{align}
\label{eq:task_space_matrices}
\begin{aligned}[c] 
& \vM_{x} = \vJ_{x}^{\dagger \top} \vM \vJ_{x}^{\dagger} \\
& \vC_{x} = \vJ_{x}^{\dagger \top} \vC - \vM_{x} \dot{\vJ}_x \dot{\vq}  \\
& \vG_{x} = \vJ_{x}^{\dagger \top} \vG  
 \end{aligned}
 \qquad
 \begin{aligned}[c]
& \vS_{x} = \vS \vJ_{x}^{\dagger} \\
& \vJ_{c,x} = \vJ_c \vJ_{x}^{\dagger} \\
& \vJ_{x}^{\dagger} = \vM^{-1} \vJ_{x}^\top \left( \vJ_{x} \vM^{-1} \vJ_{x}^\top  \right)^{-1} 
\end{aligned}
\end{align}
$\vJ_{x}^{\dagger}$ is the dynamically consistent right pseudo-inverse of the full row rank
Jacobian matrix, $\vJ_x$. In the next section, we use this equation to derive motion dynamics of three task spaces defined in our subsystem decomposition approach.

\subsection{Internal Model Control (IMC)}
The IMC structure is an alternative parametrization over the stable classical feedback controllers for linear systems. However, in contrast to the classical approach which uses the system's measured outputs for feedback control, IMC directly utilizes the system dynamics model for calculating a feedback signal based on the difference between the measured outputs and the predicted outputs of its internal model (Fig.~\ref{fig:imc}). Moreover, it uses an inverse model approach in order to improve its tracking and disturbance rejection. Another major advantage of IMC is that it allows a systematic tradeoff between performance and robustness \cite{morari89} since the closed loop sensitivity function is linear with respect to its controller. In the next section, we explain the design process of a robust IMC structure in more details.    

To the best of our knowledge, IMC has not been applied for task space force control. However, a closely related method known as disturbance observer (DOB) has been recently implemented for force control in low-level control of actuators \cite{haninger2016,roozing2016comparison}. Strictly speaking, IMC and DOB are not the same. However, they both rely on the internal model principle. The only notable exception for DOB application in rigid body level is \cite{vorndamme2016} in which DOB is used to achieve the precise tracking on a humanoid arm while being compliant. 

\section{Approach}
The main objective of this paper is to increase the robustness of the motion controller for tracking given CoM trajectories. Based on the Newton-Euler equation of motion, the CoM dynamics are only affected by external forces on the robot. Thus, a robust implementation of a given CoM plan can be achieved by robustly controlling the contact forces and contact point locations at the moments of touchdown. The robust and compliant control of the free end-effectors (e.g. swing legs and arms) has been extensively studied in the literature \cite{vorndamme2016}. Therefore, we mainly focus on the robust control of the contact forces. First, we introduce a new contact force controller which performs robustly in face of rigid body model mismatch, actuator dynamics, delays, contact surface stiffness, and unobserved ground profiles. Then, to reduce the coupling effects between controllers, we propose a generalized approach for controlling two task spaces with similar precedence. Finally, we explain our control approach for CoM and non-contact degrees of freedom (DoF). 

In our proposed method, the legged robot model is broken down into three subsystems, namely: contact subsystem, CoM subsystem, and non-contact subsystem as shown in Fig.~\ref{fig:control_structure} (the blue block). The contact subsystem includes DoF in system which directly manipulate the contact forces. The CoM subsystem is attributed to the whole-body CoM dynamics. At last, the non-contact subsystem comprises of any remaining DoF such as the swing legs. 

\begin{figure}[tbp]
    \includegraphics[width=\columnwidth]{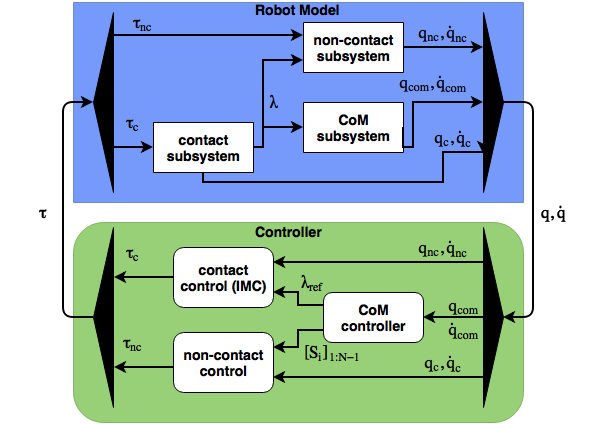}
    \caption{Overview of the model (blue) and control structure (green). The 
    system model is separated into three subsystems, a CoM subsystem, a
    contact subsystem and a non-contact subsystem. Each of these systems
    is controlled with a corresponding controller. The black triangles describe
    the mappings between subsystem states and generalized coordinates $q$ 
    as well as between subsystems torques and joint torques $\tau$.}
    \label{fig:control_structure}
\end{figure}

\subsection{Contact Force Control at End--Effectors}
The contact subsystem is defined in the range space of the end-effectors' Jacobians $\vJ_c$ that are in contact with the environment (e.g. stance legs). The equation of motion for this subsystem has the same form as \eqref{eq:task_space_equation}, thus we have
\begin{equation}  \label{eq:contact_subsystem_1}
\vC_{c}(\vq,\dot{\vq}) + \vG_{c}(\vq) = \vS_{c}^\top \vtau +  \vlambda, \qquad \vlambda_{\bot} \geq 0 ,
\end{equation}
where the end-effector acceleration, $\ddot{\vx}_{c}$, is set to zero due to the equality constraints in Equation~\eqref{eq:task_space_equation} for non-slipping contact points. Notice that the projected contact forces Jacobian in this equation is an identity matrix since $\vJ_{c,c} =  \vJ_c \vJ_{c}^{\dagger} = \vI$. The assumption that $\vJ_c$ is a full row rank matrix holds as long as there exist at least three independent DoF for each contact point. Furthermore, conveniently all of the contact related equality and inequality constraints are assimilated in Equation~\eqref{eq:contact_subsystem_1}. Assuming that all of the joints are actuated, $\vS_{c}^\top$ has the same range space as $\vJ_c$ which renders this subsystem as fully actuated. Therefore, we get
\begin{equation}  \label{eq:contact_subsystem_2}
\vC_{c}(\vq,\dot{\vq}) + \vG_{c}(\vq) = \vtau_c +  \vlambda,  \qquad \vlambda_{\bot} \geq 0
\end{equation}
where $\vtau_c = \vS_{c}^\top \vtau$. Notice that the mapping from $\vtau_c$ to $\vtau$ is not unique since for any given $\vtau_c$, there exists a subspace of dimension $Dim(\textit{Ker}(\vJ_c))$ in the generalized force space that produces the same value. Later, we will use this characteristic to decouple the contact and non-contact subsystems.

In order to control this subsystem, we first feedback-linearize its dynamics by setting $\vtau_c = \vF + \vC_{c}(\vq,\dot{\vq}) + \vG_{c}(\vq)$. Then, we get
\begin{equation}  \label{eq:contact_subsystem_fl}
\vlambda = -\vF, \qquad \vlambda_{\bot} \geq 0.
\end{equation}
According to this equation, the relation between contact forces and $\vF$ is algebraic. However, this equation is based on the rigid body modeling and hard contact assumption and does not consider the actuator dynamics and softness of the contact surfaces ($\ddot{\vx}_{c}\neq 0$). In order to design a controller which can deal with these issues, we propose a feedback structure known as IMC. To guarantee the robustness of this structure, we need to make a few assumptions on the system dynamics and the external signal type (such as references and disturbances). First, we assume that all of the actuators have the same nominal dynamics. This assumption causes the projected force $\vtau_c$ and consequently $\vF$ to have the same nominal dynamics as actuators. We use here a first order system with delay (first order deadtime system). Thus, for each component $i$, in the contact force vector, we have the following dynamics 
\begin{equation} \label{eq:contact_subsystem_linear_model}
\tilde{H}_p(s) = \frac{\lambda_i}{u_i} = - \frac{e^{-\tilde{\eta_d} s}}{\tilde{\eta} s + 1} ,
\end{equation}
where $\tilde{H}_p(s)$ is the nominal model between the $i$th contact force component and the $i$th element of the projected feedback-linearized actuator command $u_i$. $\tilde{\eta_d}$ and $\tilde{\eta}$ are the nominal delay and the time constant. In this nominal model, we assume that the system does not have a steady state error to a constant input (the gain is one) which is the case for ideal actuators. Finally, for modeling uncertainties of this model, we assume that the actual transfer function of system, $H_p(s)$, belong to the family $\Pi$ defined as
\begin{equation} \label{eq:contact_subsystem_unceratinity}
\Pi= \left\{ H_p(s) \mid \left\lVert \frac{H_p(j\omega)-\tilde{H}_p(j\omega)}{\tilde{H}_p(j\omega)} \right\rVert \leq \overline\l(\omega) \right\}
\end{equation}
which means that relative error in each frequency is bounded by $\overline\l(\omega)$. A typical shape of $\overline\l(\omega)$ can be found in Fig.~\ref{fig:robust_perfomrance}. In practice, we can also assume that $\overline\l(0) < 1$ which is equivalent to say that the constant input to the actuators produces a steady state force output with less than $100\%$ relative tracking error. This is a reasonable assumption for many force controlled robots. The external signals such as the reference forces and disturbances are assumed to be step inputs \footnote{Note that any arbitrary signal can be approximated by a finite set of shifted step functions}. This class of signals can capture the sudden changes in the contact force reference; such changes appear when establishing/breaking contacts as well as  disturbances such as slippage of the legs and push forces on the robot's body.

The proposed IMC structure is depicted in Fig.~\ref{fig:imc}. Since there is no interconnection between non-pair inputs and outputs, we use a single-input single-output (SISO) control structure for each input-output pairs. Our SISO IMC consists of two controllers, namely the disturbance rejection controller, $q_d(s)$, and the reference tracking controller, $q_r(s)$. The disturbance rejection controller uses the estimated disturbance signal $\tilde{\vd}$ to reject disturbances, $\vd$ and the reference tracking controller modulates the input signals in order to track the desired output (refer to Fig.~\ref{fig:imc}). 

Both controllers use a notion of the system inverse model ($\tilde{H}^{-1}_p(s)$) for improving the closed loop force control performance. In general, the design process of IMC consists of two steps: (i) design an optimal controller for the nominal plant and (ii) detune the nominal control to obtain robust performance. The first step of the design uses an $H_2$-optimal control method which minimizes the Integral of Square Error (ISE) criterion for the step type input. The latter step often uses a lowpass filter in order to guarantee robust stability and robust performance. The $H_2$-optimal controller has no tracking error since for ISE being finite, the reference tracking error should asymptotically approach zero. The ISE criterion is defined as:
\begin{equation*}
\min \left\lVert e \right\lvert^2_2 = \int_{0}^{\infty}{ \lVert \lambda(t)-\lambda_r(t)\rVert^2_2 dt } 
\end{equation*}
Based on $H_2$-optimal theorem \cite{morari89} the optimal controller is $q(s)= (\eta s + 1) f(s)$
for a first order deadtime system with step input which is obtained based on the nominal model inverse (${H}^{-1}_p(s)$). Here, $f(s)$ is a lowpass filter. For optimal solution $f(s)$ should be unity. However, there are two major issues with this choice: first, since the degree of the nominator is higher than the degree of the denominator, this controller will not be a causal system. Second, this system has a high amplitude in the high frequencies which makes it susceptible to sensory noise and model uncertainty such as non-stiff ground (which produces high frequency $\ddot{\vx}_c$). To overcome these issues we use a lowpass filter with a proper structure. The following theorem states that there exists such a filter which can guarantee stability and zero tracking error for all the system in the family $\Pi$.    

\begin{theorem}
Assume $\overline\l(\omega)$ is continuous and $\overline\l(0) < 1$. Then there exists a lowpass filter $f(s)$ such that the closed loop system is robustly stable for family $\Pi$ and it tracks asymptotically error-free constant inputs.
\end{theorem}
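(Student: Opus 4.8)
The plan is to work in the IMC parametrization, where nominal internal stability is automatic, and to reduce the two demands of the theorem---robust stability over $\Pi$ and error-free tracking of constant inputs---to a pair of scalar conditions on the lowpass filter $f(s)$ that can be satisfied simultaneously precisely because $\overline\l(0)<1$.

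First I would factor the nominal deadtime model into an allpass part and an invertible minimum-phase part, $\tilde{H}_p(s)=\tilde{p}_A(s)\,\tilde{p}_M(s)$, with $\tilde{p}_A(s)=e^{-\tilde{\eta_d} s}$ (so that $|\tilde{p}_A(j\omega)|=1$ and $\tilde{p}_A(0)=1$) and $\tilde{p}_M(s)=-1/(\tilde{\eta}s+1)$. I then take the IMC controller as $q(s)=\tilde{p}_M^{-1}(s)f(s)=-(\tilde{\eta}s+1)f(s)$, which is stable and is proper as soon as $f$ has relative degree at least one. Since $\tilde{H}_p$ is stable and $q$ is stable and proper, the nominal IMC loop is internally stable for every such $f$, and the nominal complementary sensitivity collapses to $\tilde{T}(s)=\tilde{H}_p(s)q(s)=\tilde{p}_A(s)f(s)$. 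This yields the two identities the rest of the argument rests on: $|\tilde{T}(j\omega)|=|f(j\omega)|$ and $\tilde{T}(0)=f(0)$.

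Next I would convert the two demands into conditions on $f$. For tracking, the IMC error transfer function is $e/r=(1-\tilde{H}_p q)/\bigl(1+(H_p-\tilde{H}_p)q\bigr)$; applying the final-value theorem to a step reference shows the steady-state error is proportional to $1-\tilde{T}(0)=1-f(0)$, so it vanishes exactly when $f(0)=1$. The assumption $\overline\l(0)<1$ enters here as well, since it forces $H_p(0)$ to retain the sign and order of magnitude of $\tilde{H}_p(0)=-1$, keeping the denominator nonzero so that the final-value theorem applies for every plant in $\Pi$. For robustness I would invoke the standard IMC robust-stability result for multiplicative uncertainty \cite{morari89}: with the nominal loop stable, every member of $\Pi$ is stabilized if and only if $\overline\l(\omega)\,|\tilde{T}(j\omega)|<1$ for all $\omega$, which by the identity above is simply $\overline\l(\omega)\,|f(j\omega)|<1$.

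It remains to produce a lowpass filter satisfying both $f(0)=1$ and $\overline\l(\omega)|f(j\omega)|<1$ for all $\omega$, and this existence step is the heart of the proof. I would use the one-parameter family $f(s)=1/(\lambda s+1)^{n}$, for which $f(0)=1$, $|f(j\omega)|$ decreases monotonically from $1$ to $0$, and $q$ is proper for $n\ge1$. At $\omega=0$ the robustness inequality is exactly $\overline\l(0)<1$, and by continuity of $\overline\l$ it persists on an interval $[0,\omega^\ast)$, on which $|f|\le1$ already gives $\overline\l|f|\le\overline\l<1$. On the remaining range $\omega\ge\omega^\ast$, where $\overline\l$ may exceed one, I would enlarge the time constant $\lambda$ until $|f(j\omega^\ast)|$ falls below $1/\sup_{\omega\ge\omega^\ast}\overline\l(\omega)$; monotonicity of $|f|$ then carries the strict bound to all higher frequencies. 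The main obstacle is exactly this crossover band: the filter must already be rolled off wherever $\overline\l\ge1$ while still holding $|f(0)|=1$ for tracking, a tension that can be resolved only because $\overline\l(0)<1$ and that is governed by the single scalar $\lambda$ (with the order $n$ raised if $\overline\l$ is unbounded, so that $\omega^{-n}\overline\l(\omega)\to0$). The filter so obtained establishes the claim.
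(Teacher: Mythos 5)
Your argument is correct and is essentially the paper's own proof written out in full: the paper simply cites Corollaries~4.3-2 and 4.4-2 of Morari and Zafiriou, whose content is exactly your reduction to $f(0)=1$ for asymptotic step tracking and $\overline{l}(\omega)\lvert f(j\omega)\rvert<1$ for robust stability, reconciled at $\omega=0$ by the hypothesis $\overline{l}(0)<1$ and elsewhere by rolling off $f$. The only caveat is that your explicit construction of $f$ tacitly needs $\overline{l}$ to be bounded (or at most polynomially growing), which holds for the first-order-deadtime uncertainty family actually used in the paper.
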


\begin{proof}
The proof is based on the corollary~(4.3-2) and corollary~(4.4-2) in \cite{morari89} and the assumption $\overline\l(0) < 1$.
\end{proof}   

In practice, this filer should be designed by using the minimum order lowpass filter which results in a satisfactory performance. On the system used in this paper, a first order lowpass filter with one free parameter $\eta_f$ was enough for robust stability and performance. 
\begin{equation} \label{eq:lowpass_filter}
f(s) = \frac{1}{\eta_f s + 1}
\end{equation}
This parameter can be tuned directly on the hardware, by gradually increasing $\eta_f$ until a satisfactory performance is obtained. Since $\eta_f$ determines the response time of the controller, increasing $\eta_f$ reduces the performance of the closed loop system in favour of robustness. In order to analytically guarantee the robust performance of the controller for all systems in the family $\Pi$, we need to show that the following condition holds for all the frequencies
\begin{equation}  \label{eq:robust_performance}
\left\lvert \overline{l}(\omega)f(i\omega) \right\lvert + \left\lvert \left( 1-e^{-i\eta_d\omega}f(i\omega) \right) w \right\lvert < 1
\end{equation}
where $w$ is a weighting used for determining the desired closed-loop system's bandwidth. A typical shape of $w^{-1}$ can be found in \cite{morari89}. In Section~\ref{sec:result}, we explain a process of determining $\eta_f$ in order to fulfil the robust performance criterion in Equation~\eqref{eq:robust_performance}.

The last but not least advantage of using the IMC structure is that we can optimally and robustly incorporate the input/output constraints \cite{turner04}. In the contact force control case, we mostly have output constraints such as the unilateral constraint of orthogonal forces to the contact surface and stiction/friction in the actuators. To this end, we have used a hinge-function block at the output of the nominal model in order to account for unilateral constraint for orthogonal contact surfaces. Moreover, to model the stiction/friction of the actuators, we use a deadzone block on the estimated disturbance $\tilde{d}$ signal. The latter can be interpreted as the case where the small difference between the nominal model's output force and the plant's output force is considered as the actuators' stiction/friction resisting force. In the next section, we study the coupling effect between the contact subsystem and other remaining DoF and introduce a generalized method to prevent this cross coupling.   


\begin{figure}[tbp]
    \includegraphics[width=\columnwidth]{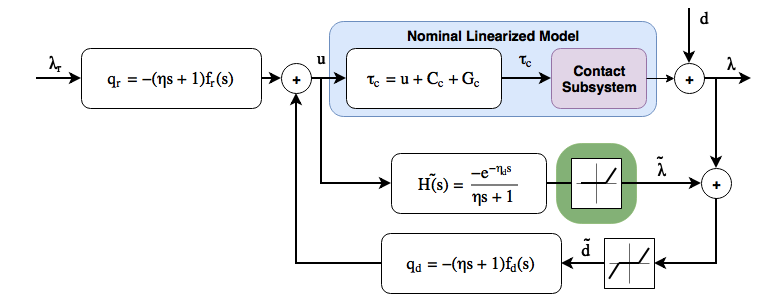}
    \caption{Overview of the contact force control structure. This controller uses feedback-linearization, the obtain a linearized plant. This then allows to use an IMC structure composed of the internal model $\tilde{H}_p(s)$, disturbance rejection controller $q_d(s)$, tracking controller $q_r(s)$, and output constraint blocks. Note that the hinge-function block (green) appears only in the control loop for orthogonal forces}.
    \label{fig:imc}
\end{figure}

\subsection{Task Space Control Decoupling}
Special care should be taken in mapping $\vtau_c = \vS_{c}^\top \vtau$. This is because the generalized forces $\vtau$ are not only manipulated by the contact subsystem's controller, but also by the other controllers acting on remaining DoF. This can produce undesirable couplings between controllers. In this section, we show how to systematically decouple the controllers for different DoF. To this end, we break down the legged robot model into three subsystems: contact subsystem, CoM subsystem, and non-contact subsystem as shown in Fig.~\ref{fig:control_structure}. Since the contact subsystem and its controller structure have already been introduced in the previous section we now briefly introduce the two other subsystems.

\paragraph{CoM subsystem} 
The CoM subsystem describes the equation of motion governing the CoM of the robot. As the Newton-Euler equations state the acceleration (angular and linear) of the CoM is proportional to the net external generalized forces (net torques and net forces). Therefore, the CoM subsystem is only affected by the external forces (such as contacts and disturbances) and is independent of the internal contact forces. Hence, the equation of motion is:
\begin{equation} \label{eq:com_subsystem}
\vM_{com}(\vq) \ddot{\vx}_{com} + \vC_{com}(\vq,\dot{\vq}) + \vG_{com}(\vq) = \vJ_{c, com}^\top \vlambda
\end{equation}
where $\vM_{com}$ consists of two 3-by-3 blocks on the main diagonal, where one block corresponds to the CoM angular momentum inertia and the other block to the total mass of robot multiplied by an identity matrix. The CoM subsystem has two distinct mechanisms for control. The first one is the contact forces of the stance feet in each of the phases of motion and the second one is the switching time between two consecutive motion phases and the contact point locations. Therefore, the CoM controller determines the desired contact forces and the switching times in order to track given trajectories. These desired forces are used as reference inputs to the contact subsystem controller and the switching times are used in the non-contact subsystem controller in order to establish/break contacts at given times and given locations.

\paragraph{Non-contact subsystem}
The non-contacting subsystem includes all the remaining DoF after excluding the contact subsystem and CoM subsystem. The task space Jacobian matrix for this subsystem can be defined in a way that either it picks the remaining generalized coordinates or it chooses any Cartesian task-dependent coordinates. In general, the equation of motion for this task does not have any special structure.
\begin{equation} \label{eq:noncontact_subsystem}
\vM_{nc}(\vq) \ddot{\vx}_{nc} + \vC_{nc}(\vq,\dot{\vq}) + \vG_{nc}(\vq) =\tau_{nc} +  \vJ_{c,nc}^\top \vlambda ,
\end{equation}
where $\ddot{\vx}_{nc}$ is the non-contact subsystem acceleration and $\tau_{nc} = \vS_{nc}^\top \vtau$ is the projected torque vector.

As Equations~\eqref{eq:contact_subsystem_1},~\eqref{eq:com_subsystem},~and~\eqref{eq:noncontact_subsystem} shows the contact and non-contact subsystems are the only ones that are directly affected by the generalized forces, $\tau$. Therefore, in order to avoid undesirable coupling between controllers of the contact and non-contact subsystems, the control input of subsystems, $\vtau_c$ and $\tau_{nc}$, should be decoupled.

In general, there are two strategies to deal with this coupling effect: using hierarchical tasks control approach or using complete decoupling approach; the latter we call: ``coequal tasks control''. The hierarchical tasks control sets low-priority task in the null space of the high priority task. Thus, the top level task can be designed independently whereas the lower level task needs to be informed of the higher task command. On the contrary, the coequal tasks control approach sets each of the two tasks in the null space of the other task. Hence, the tasks are decoupled into two completely independent tasks which allows for designing of each controller without the knowledge of the other task's control command. Theorem~\ref{th:coequal_task} introduces a generalized formulation for representing these two approaches.   

\begin{theorem} \label{th:coequal_task}
Assume that the contact subsystem and non-contact subsystem have full column rank input selection matrices $\vS_{c}$ and $\vS_{nc}$ respectively. The projected forces of subsystems, namely $\vtau_{c}$ and $\vtau_{nc}$, can be mapped by the following transformations to the generalized forces $\vtau$.
\begin{equation}
\vtau = \vW_{c} \vS_{c} \Big( \vS_{c}^\top \vW_1 \vS_{c} \Big)^{-1}\! \vtau_{c} + \vW_{nc} \vS_{nc} \Big( \vS_{nc}^\top \vW_2 \vS_{nc} \Big)^{-1}\! \vtau_{nc}
\end{equation}
with $\vW_{c}$ and $\vW_{nc}$ are defined as
\begin{align}
\label{eq:W1}
\vW_{c} &= \vW - \alpha_{nc} \vW \vS_{nc} \left( \vS_{nc}^\top \vW \vS_{nc} \right)^{-1} \vS_{nc}^\top \vW \\
\label{eq:W2}
\vW_{nc} &= \vW - \alpha_{c\ } \vW \vS_{c\ } \left( \vS_{c\ }^\top \vW \vS_{c\ } \right)^{-1} \vS_{c\ }^\top \vW 
\end{align}
where $\vW$ is an arbitrary full rank weighting matrix and $\alpha_{c}$, $\alpha_{nc} $ $\in \{0,1\}$. If both $\alpha$'s are $1$ then the tasks are coequal which means that one task control input does not directly affect the other subsystem's state evolution. If only one of them is $1$ then the tasks have a hierarchy (task with $\alpha_i=1$ has precedence). The case where both $\alpha$'s are $0$ does not have any specific structure. 
\end{theorem}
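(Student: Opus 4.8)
The plan is to treat $\vW_c$ and $\vW_{nc}$ as weighted oblique projectors of $\vW$ and to prove the claim by direct substitution, the only nontrivial point being well-definedness of the inverses. Throughout I read the weightings in the inverses as $\vW_1=\vW_c$ and $\vW_2=\vW_{nc}$ (so that each self-map is idempotent), and I take $\vW$ to be symmetric positive definite; independence of $\text{range}(\vS_c)$ and $\text{range}(\vS_{nc})$ is guaranteed by the subsystem construction (disjoint DoF), and I will show it is exactly what is needed.

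First I would record the two algebraic identities
\begin{equation}
\vS_{nc}^\top \vW_c = (1-\alpha_{nc})\,\vS_{nc}^\top \vW, \qquad \vS_c^\top \vW_{nc} = (1-\alpha_c)\,\vS_c^\top \vW .
\end{equation}
Each follows by left-multiplying the definitions \eqref{eq:W1}, \eqref{eq:W2} by $\vS_{nc}^\top$ respectively $\vS_c^\top$ and using the cancellation $\vS_{nc}^\top \vW \vS_{nc}(\vS_{nc}^\top \vW \vS_{nc})^{-1}=\vI$ (respectively for $\vS_c$). These are the heart of the argument: the projector subtracted in $\vW_c$ annihilates exactly the $\vS_{nc}$-component when $\alpha_{nc}=1$, and symmetrically for $\vW_{nc}$.

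The main obstacle is showing that $(\vS_c^\top \vW_c \vS_c)^{-1}$ and $(\vS_{nc}^\top \vW_{nc}\vS_{nc})^{-1}$ exist, since $\vW_c,\vW_{nc}$ are singular when the corresponding $\alpha=1$. I would introduce the $\vW$-orthogonal projector $\vP_{nc}=\vS_{nc}(\vS_{nc}^\top \vW \vS_{nc})^{-1}\vS_{nc}^\top \vW$, which is idempotent and satisfies $(\vI-\vP_{nc})^\top \vW = \vW(\vI-\vP_{nc})$. For $\alpha_{nc}=1$ one has $\vW_c=\vW(\vI-\vP_{nc})$, so that $\vS_c^\top \vW_c \vS_c = [(\vI-\vP_{nc})\vS_c]^\top \vW[(\vI-\vP_{nc})\vS_c]$ is a $\vW$-Gram matrix, hence symmetric positive definite precisely when $(\vI-\vP_{nc})\vS_c$ has full column rank. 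Since $\ker(\vI-\vP_{nc})=\text{range}(\vS_{nc})$, this rank condition is equivalent to $\text{range}(\vS_c)\cap\text{range}(\vS_{nc})=\{0\}$, i.e. the independence of the two subsystems; the case $\alpha_{nc}=0$ is trivial since then $\vW_c=\vW$. The symmetric computation handles $\vS_{nc}^\top \vW_{nc}\vS_{nc}$.

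Finally I would substitute the mapping into $\vS_c^\top \vtau$ and $\vS_{nc}^\top \vtau$. The self-terms collapse via $\vS_c^\top \vW_c \vS_c(\vS_c^\top \vW_c \vS_c)^{-1}=\vI$, and the cross-terms carry the factors from the key identities, yielding
\begin{equation}
\vS_c^\top \vtau = \vtau_c + (1-\alpha_c)\,\vS_c^\top \vW \vS_{nc}(\vS_{nc}^\top \vW_{nc}\vS_{nc})^{-1}\vtau_{nc},
\end{equation}
and symmetrically $\vS_{nc}^\top \vtau = \vtau_{nc} + (1-\alpha_{nc})\,\vS_{nc}^\top \vW \vS_c(\vS_c^\top \vW_c \vS_c)^{-1}\vtau_c$. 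Reading off the coefficients proves the three cases: both $\alpha=1$ gives $\vS_c^\top\vtau=\vtau_c$ and $\vS_{nc}^\top\vtau=\vtau_{nc}$, so each subsystem is driven only by its own command (coequal, cf. \eqref{eq:contact_subsystem_2} and \eqref{eq:noncontact_subsystem}); exactly one $\alpha=1$ leaves a single surviving cross-term, so the protected task ($\alpha_i=1$) is designed independently while the other must feed-forward its partner's command (hierarchy); and both $\alpha=0$ retains both cross-terms with $\vW_c=\vW_{nc}=\vW$, giving no decoupling structure.
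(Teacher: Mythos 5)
Your proof is correct and follows essentially the same route as the paper's: substitute the proposed mapping into $\vS_{c}^\top \vtau$ (and symmetrically into $\vS_{nc}^\top \vtau$), collapse the self-term to the identity, and read the $(1-\alpha_{c})$ factor off the cross-term using the definition of $\vW_{nc}$. The only substantive addition is your justification that $\vS_{c}^\top \vW_{c} \vS_{c}$ remains invertible when $\vW_{c}$ is singular (which needs $\vW$ symmetric positive definite and $\mathrm{range}(\vS_{c}) \cap \mathrm{range}(\vS_{nc}) = \{0\}$) --- a point the paper's proof passes over silently, and which your Gram-matrix argument settles correctly.
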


\begin{proof}
A simple way to prove this is to check the effect of one of the subsystem's generalized control input on the other subsystem's equation of motion. As an example, for the contact subsystem we have
\begin{equation*}
\vM_{c} \ddot{\vq}_{c} + \vC_{c} + \vG_{c} = \vS_{c}^\top \vtau +  \vJ_{c,c}^\top \vlambda
\end{equation*}
The generalized forces affect contact subsystem through the term $\vS_{c}^\top \vtau$. For this term we can write
\begin{align*}
\vS_{c}^\top \vtau &= \vS_{c}^\top \left( \vW_{c} \vS_{c} ( \vS_{c}^\top \vW_1 \vS_{c} )^{-1} \vtau_{c} + \vW_{nc} \vS_{nc} ( \vS_{nc}^\top \vW_2 \vS_{nc} )^{-1}\!\vtau_{nc} \right) \\ 
&=  \vtau_{c} + \vS_{c}^\top \vW_{nc} \vS_{nc} \left( \vS_{nc}^\top \vW_{nc} \vS_{nc} \right)^{-1} \!\vtau_{nc} 
\end{align*}
Using the definition of $\vW_{nc}$ in Equation~\eqref{eq:W2}, we obtain
\begin{equation}
\vS_{c}^\top \vtau = \vtau_{c} + (1-\alpha_{c}) \vS_{c}^\top \vW \vS_{nc} \left( \vS_{nc}^\top \vW \vS_{nc} \right)^{-1} \! \vtau_{nc} 
\end{equation}
For $\alpha_c=1$, contact subsystem will not be affected by $\vtau_{nc}$ and if $\alpha_c=0$ contact subsystem will be affected by the non-contact subsystem's projected force vector. Therefore, for $\alpha_{c}=\alpha_{nc}=1$ the two subsystems are independent (coequal), while if only one of the $\alpha$'s is $1$, the subsystem with $\alpha$ equal to $1$ will have precedence over the other subsystem.    
\end{proof}

You may notice that Theorem~\ref{th:coequal_task} requires the selection matrices to be full rank which is the case for fully actuated robots. However, for underactuated robots, this may not be the case. Under this circumstance, if we partition the selection matrix by unactuated and actuated parts, we can still use Theorem~\ref{th:coequal_task}. Fig.~\ref{fig:control_structure} shows relationships between subsystems using the coequal task space decomposition. 

\subsection{CoM Subsystem Controller}
In this work, we assume that the constraint satisfactory reference CoM motion is given by an external planner. In addition to reference pose and velocity of CoM \footnote{Note that the CoM orientation is not a measurable physical entity. Therefore, using its value in a feedback controller structure requires an estimation scheme through integrating CoM velocity which is prone to drift in the absence of direct measurements. There are two solutions to this problem (i) approximating the CoM orientation with the base orientation which we use in the case where the planner provides desired CoM acceleration. (ii) using the modeling approach introduced in \cite{farshidian17}. This approach normally designs the desired contact forces directly.}, the planner should either provide the desired acceleration of the CoM or the contact forces which realize the desired CoM motion. In order to deal with the discrepancies between the model and hardware, we use a PD controller on the CoM desired trajectories. This PD controller provides correction to the CoM acceleration which is later mapped to an equivalent correction of the desired contact forces. First, we assume the case where the planner provides the desired acceleration of the CoM. The corrected CoM acceleration can be calculated as: $\ddot{\vx}_{corr} = \ddot{\vx}_{com, ref} + K_d (\dot{\vx}_{com}-\dot{\vx}_{com, ref})\ + 
K_p (\vx_{com}-\vx_{com, ref})$,
%
%
where $\ddot{\vx}_{corr}$ is the corrected acceleration. Based on the CoM subsystem equation in \eqref{eq:com_subsystem}, the reference contact force should satisfy the following equation
\begin{equation*}
\vM_{com}(\vq) \ddot{\vx}_{corr} + \vC_{com}(\vq,\dot{\vq}) + \vG_{com}(\vq) = \vJ_{c, com}^{\top} \vlambda_r .
\end{equation*}
Depending on the number of the contact points and their configuration, the above linear matrix equation may have one, none, or multiple solutions. Furthermore, in order to produce valid desired contact forces, we need to respect the unilateral orthogonal contact force constraints and Coulomb friction limits. To this end, we have defined the following Quadratic Programming (QP) problem
\begin{equation}
\label{eq:program}
\begin{aligned}
& \underset{\vlambda_r}{\text{minimize}}
& & \Vert \vJ_{c, com}^{\top} \vlambda_r-\vF_r \Vert^2_2 \\
& \text{subject to}
& & \vlambda_Z \geq 0, \quad \Sigma\left[ \vlambda \right] \geq 0. 
\end{aligned}
\end{equation}
where $\vF_r = \vM_{com}(\vq) \ddot{\vx}_{corr} + \vC_{com}(\vq,\dot{\vq}) + \vG_{com}(\vq)$. The first set of inequality constraints are the unilateral constraint of the orthogonal forces and the second set represents the convex polyhedral approximation of the friction cones, $\Sigma\left[ \cdot \right]$. In our implementation, we have approximated the friction cone with an inscribed pyramid whose base is a twelve sided polygon. To solve this problem, we use the freely available QP solver QuadProg++ \cite{quadProg} which uses a dual method to solve the QP problem. In our C++ implementation, we are able to solve this QP problem in each step of the CoM control loop in less than $100 \mu s$. Finally, in the case where the planner directly provides the desired contact forces, we first map the desired contact forces to the net forces ($\vJ_{c, com}^{\top} \vlambda_{ref}$). Then, we add the PD controller correction force. Finally, we map the corrected net forces back to the contact forces while satisfy the contact constraints in Problem~\eqref{eq:program}.  

\subsection{Non-contact Subsystem Controller}
Except for possible user defined task, the main control objective for the non-contact subsystem is to establish contact in the next motion phase at a desired location. The controller on this subsystem should control the end-effectors (e.g. feet) to touch the contact surface at the given switching times. Here, we use inverse dynamics and a PD correction to track the reference inputs. We tune the controller to the lowest possible impedance that still achieves satisfactory tracking.

\section{Results} \label{sec:result}
In order to calculate the optimally robust $\eta_f$ which satisfies the robust performance inequality in Equation~\eqref{eq:robust_performance}, we need to determine an upper bound of the multiplicative uncertainty, $\overline{l}(\omega)$. We assume that the family $\Pi$ in Equation~\eqref{eq:contact_subsystem_unceratinity} includes first order deadtime systems. Since the delay in our system is fixed and only stems from known communication delay, we set $\eta_d = \tilde{\eta}_d = 0.003$. Then, we have approximated the uncertainties of DC gain, $k$, and the time constant, $\eta$, of the actuators based on the real data where we got the following bounds
\begin{equation*}
\begin{aligned}
&\left\vert k - \tilde{k} \right\vert \leq  \Delta k 
& & \tilde{k}=1.00, &\Delta k = 0.40 \\
&\left\vert \eta - \tilde{\eta} \right\vert \leq  \Delta \eta 
& & \tilde{\eta}=0.02, &\Delta \eta = 0.01
\end{aligned}
\end{equation*}
Based on this estimation for $\Delta k$ and $\Delta \eta$, we obtain the upper bound $\overline\l(\omega)$  in Equation~\eqref{eq:contact_subsystem_unceratinity} which is shown in Fig.~\ref{fig:robust_perfomrance} for different frequencies. Our weighting function, $w(s)$, has a unity weight up to 50~[$rad/s$] (which is the controller's bandwidth) then it rolls off with -20~[$dB/dec$]. Thus, $w(s)$ is equivalent to a first order filter with time constant 0.02. 

Since the reference tracking controller in IMC structure, $q_r(s)$, does not affect the closed-loop robustness, its lowpass filter is tuned to achieve smooth changes in the realized contact forces. The robust performance criterion (Equation~\eqref{eq:robust_performance}) for the disturbance rejection controller $q_d(s)$ is demonstrated in Fig.~\ref{fig:robust_perfomrance}. As it can be seen, for $\eta_f$ around $0.3$ this criteria is satisfied while decreasing $\eta_f$ causes the loss of guarantee on robust performance. For example for $\eta_f=0.01$, the red curve goes over $1$ in certain frequencies. Thus, we set $\eta_f=0.03$, in order to guarantee robust performance in disturbance rejection controller. In practice, it is better to add an extra fast pole to this filter (with around 10 times smaller response time) in order to filter out sensor noise. 

\begin{figure}[tbp]
    \includegraphics[width=\columnwidth]{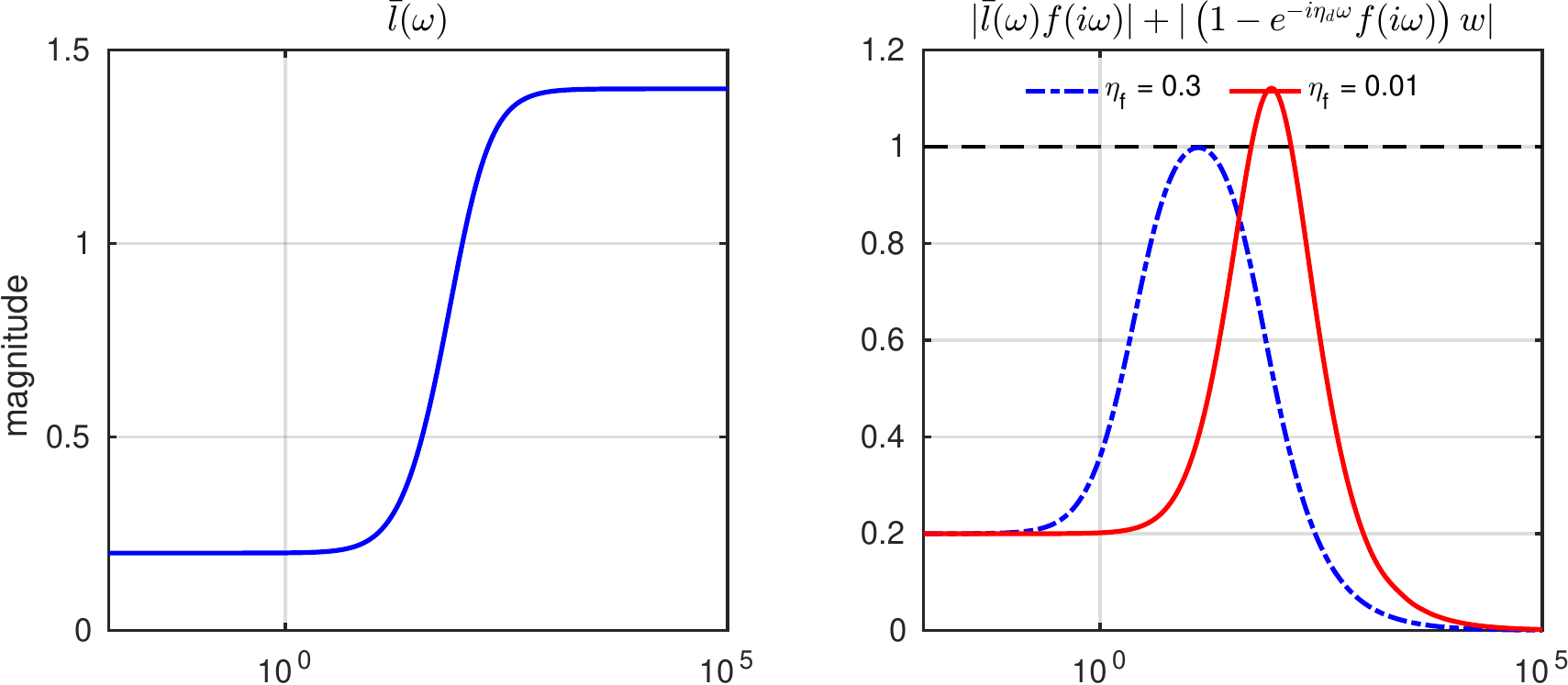}
    \caption{Bode plots of $\bar{l}(\omega)$ and left hand side in the Eq. \ref{eq:robust_performance}. As the weighting function ($w$) we use a first order filter with time constant 0.02. Hence, the controller robustly rejects disturbances up to the frequency of 50 [$rad/s$].}
    \label{fig:robust_perfomrance}
\end{figure}
    
We design a number of experiments to compare our approach of full body control against approaches that rely on inverse dynamics and PD control in the robot's joints (e.g. \cite{winkler17}) for tracking the reference trajectory. We start by showing the results obtained in simulation using SL simulation package \cite{Schaal2009}. In Fig.~\ref{fig:cmp_soft_sim_pos}, our approach is compared with the inverse dynamics based approach by showing the CoM tracking performance while executing the walking gait.

\begin{figure}[tbp]
    \includegraphics[width=\columnwidth]{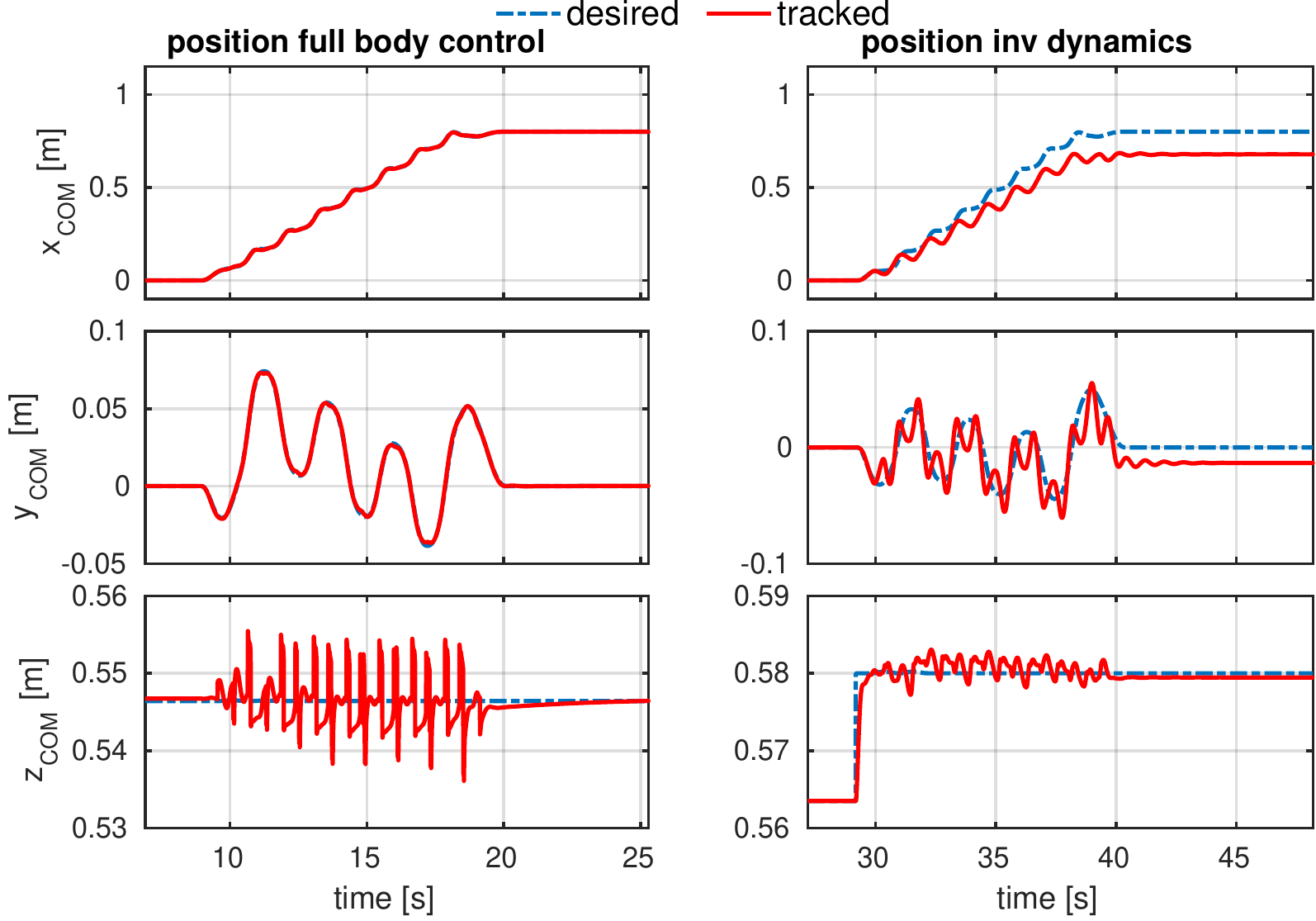}
    \caption{Simulation results for tracking performance in CoM position. Our full body control approach is shown on the left and the inverse dynamics based approach presented in \cite{winkler17RAL} is shown on the right}
    \label{fig:cmp_soft_sim_pos}
\end{figure}

The plots shown in Fig.~\ref{fig:cmp_soft_sim_pos} are obtained by simulating relatively low ground stiffness (spring constant of $1.0 \times 10^4 N/m$). While both approaches produce comparable tracking performance on a stiff ground ( $6.0 \times 10^5 N/m$), our approach brings significant improvement for walking on soft ground. The walking experiment was also conducted on a real robot. To simulate soft ground conditions, we wrapped HyQ's feet in foam and added soft rubbery ground for HyQ to walk on (see \href{https://youtu.be/bE2_-lpZU7o}{video} \footnote{The reader can find the videos of the
following simulation results online: \url{https://youtu.be/bE2_-lpZU7o} }). The comparison of tracking performances is shown in Fig.~\ref{fig:cmp_soft_hw_pos}. While both controllers achieve similar tracking performance in $y$ and $z$ directions, in $x$ direction the inverse dynamics based controller accumulates about 10~[$cm$] error in about 1~[$m$] of walking. On the other hand, our controller has less than 1~[$cm$] error in tracking of the $x$ direction.

To show that our whole-body control approach indeed actively controls end-effector contact forces without any auxiliary PD controllers in the joints, we conduct the seesaw and the plank experiments. In both experiments, the contact subsystem has no knowledge about changes in the robot's foothold positions. For the first experiment, we put a plank under the robot's front feet and lift it up while the robot's hind legs are still standing on the floor. The plank is displaced along the vertical axis; however, we also introduce lateral and longitudinal displacement in order to demonstrate the robustness of our approach (see \href{https://youtu.be/bE2_-lpZU7o}{video}). For the second experiment, HyQ is placed on a seesaw which we swing while the robot is maintaining the upright position of its base by actively controlling the feet's contact forces. Snapshots of the plank and seesaw experiments are shown in Fig.~\ref{fig:plank_exp_snap}~and~\ref{fig:seesaw_exp_snap}.

Disturbance rejection performance for the plank experiment is shown in Fig.~\ref{fig:plank_exp_pos} and the contact force tracking performance is shown in Fig.~\ref{fig:plank_exp_force}. In both figures, the light grey rectangles denote the time intervals during which the longitudinal and the lateral disturbances have been applied to the plank (see \href{https://youtu.be/bE2_-lpZU7o}{video}). We show the pitching angle $\beta$, the $z$ coordinate of the robot's base, the $z$ coordinate of the left front (LF) and the left hind (LH) feet as well as the contact forces for LF and LH feet. Comparing the z coordinate of the feet with the base pitching angle and z coordinate, we see that the effect of disturbance on the base is attenuated. This shows the performance of the contact force controller which robustly tracks the the desired forces under disturbances such as moving contact points. A similar results can be seen for the seesaw experiment in Fig.~\ref{fig:seesaw_exp_pos} and \ref{fig:seesaw_exp_force}. Like the plank experiment, the disturbance is considerably reduced. 

\begin{figure}[tbp]
    \includegraphics[width=\columnwidth]{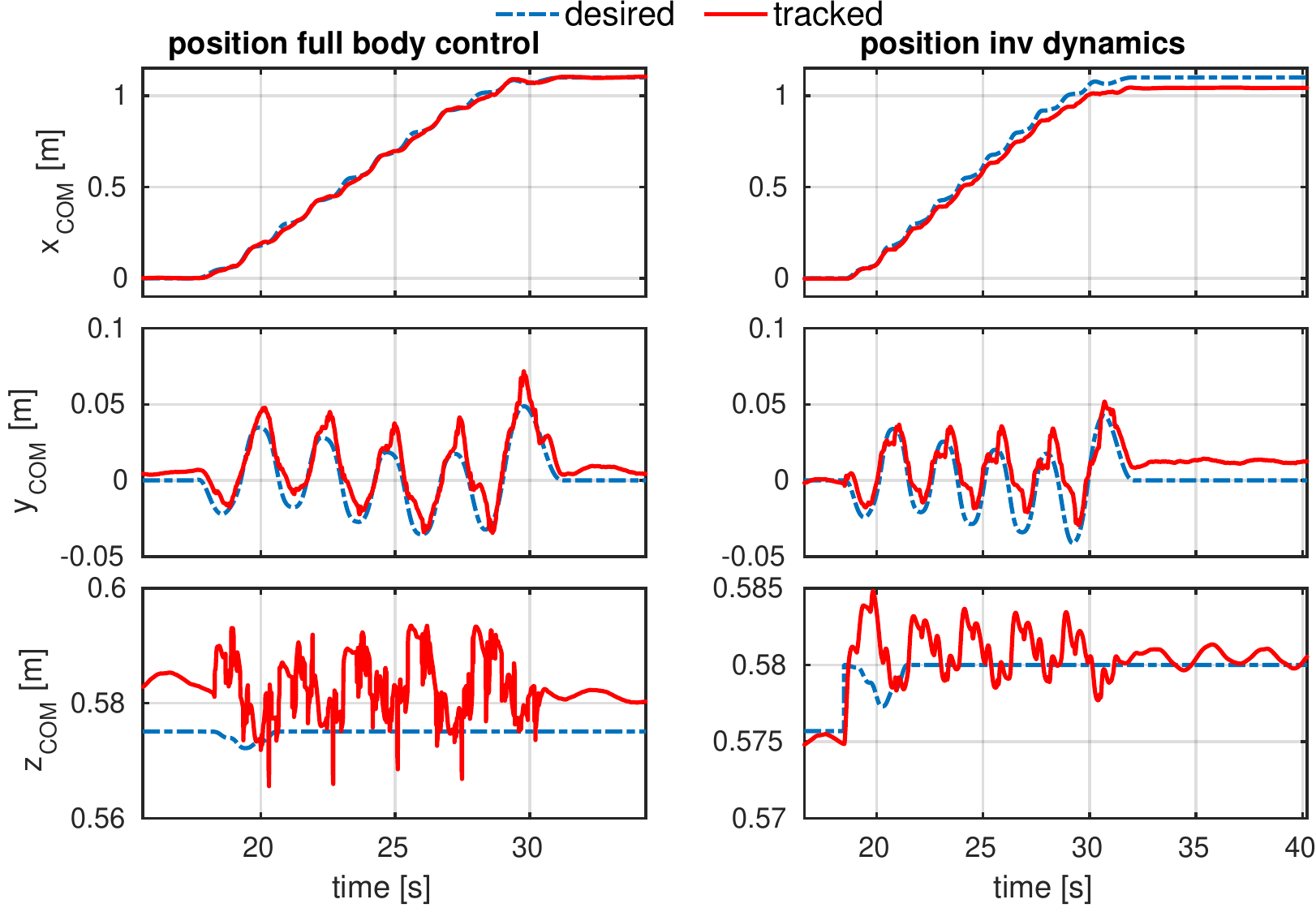}
    \caption{Hardware results for tracking performance in CoM position. Our full body control approach is shown on the left and the inverse dynamics based approach presented in \cite{winkler17RAL} is shown on the right.}
    \label{fig:cmp_soft_hw_pos}
\end{figure}

\section{Conclusion and Outlook}

In this paper, we presented a method for whole-body control of legged robots with the goal of tracking a planned trajectory for the CoM. The resulting full body controller has a feedback structure which increases the robustness in the presence of unmodeled actuator dynamics, changes in the ground stiffness and unobserved ground profile. The robust performance of our full body controller stems from the IMC control structure which we use to control the end-effector contact forces. The IMC structure allows us to incorporate model constraints specific for the contact subsystem, admits an easy and intuitive tuning process and provides theoretical guarantees on the robust performance of the resulting contact subsystem controller. 

Our full body controller relies on decomposing system dynamics into three subsystems. These three subsystems are the contact subsystem, the CoM subsystem and the non-contact subsystem. This decomposition allows us to design controllers separately for each system in a way such that there is no direct interference between different subsystems, which results in the better trajectory tracking for the CoM. We have demonstrated the effectiveness of our approach in simulation and on the real robot. Moreover, we have verified robustness of our approach by comparing its performance to the existing approach for legged robot motion control that relies mainly on inverse dynamics based controller.

In future, we would like to have this motion controller working together with our dynamic programming motion planner, OCS2,  \cite{farshidian16} which designs contact forces references and switching time based on the first principle of optimality and test our controller for high dynamic movements such as trotting. Moreover, we like to improve our contact force controller by incorporating contact force sensors in the control loop where we currently use state estimation algorithm.


\section*{Acknowledgement} \footnotesize{This research has been supported in part by a Max-Planck ETH Center for Learning Systems Ph.D. fellowship to Farbod Farshidian and a Swiss National Science Foundation Professorship Award to Jonas Buchli and the NCCR Robotics.}

\bibliographystyle{bibliography/IEEEtran} \bibliography{bibliography/references}

 \begin{figure*}[tbp]
    \includegraphics[width=\textwidth]{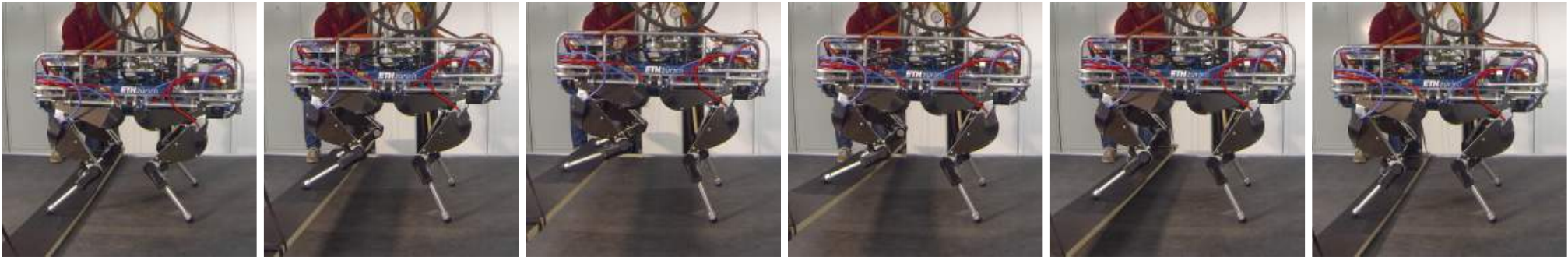}
    \caption{Plank experiment with HyQ. The plank is initially on the ground and then is lifted up. In order to keep its balance, the CoM controller modulates reference contact forces. The contact force controller has no knowledge of these changes. However, it robustly tracks the contact forces under the disturbances such as moving contact points.}
    \label{fig:plank_exp_snap}
\end{figure*}
 \begin{figure*}[tbph]
    \includegraphics[width=\textwidth]{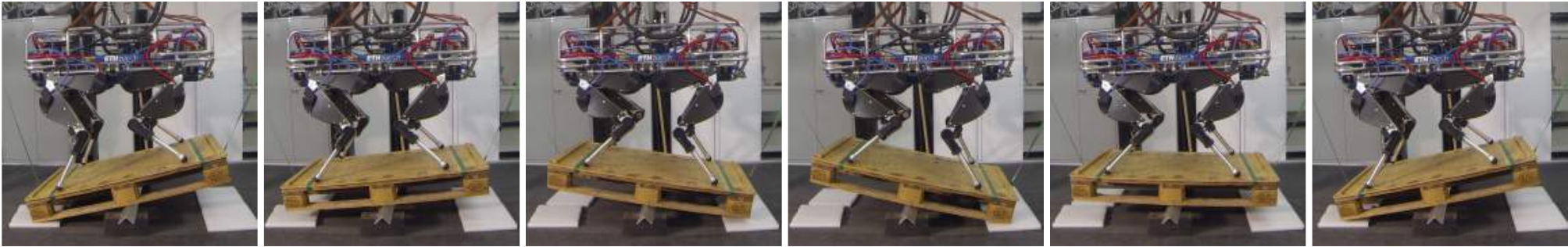}
    \caption{Seesaw experiment with HyQ. We swing the plank back and forth while the robot maintains the upright position.}
    \label{fig:seesaw_exp_snap}
\end{figure*}

\begin{figure}[tbp]
    \includegraphics[width=\columnwidth,height=5.5cm]{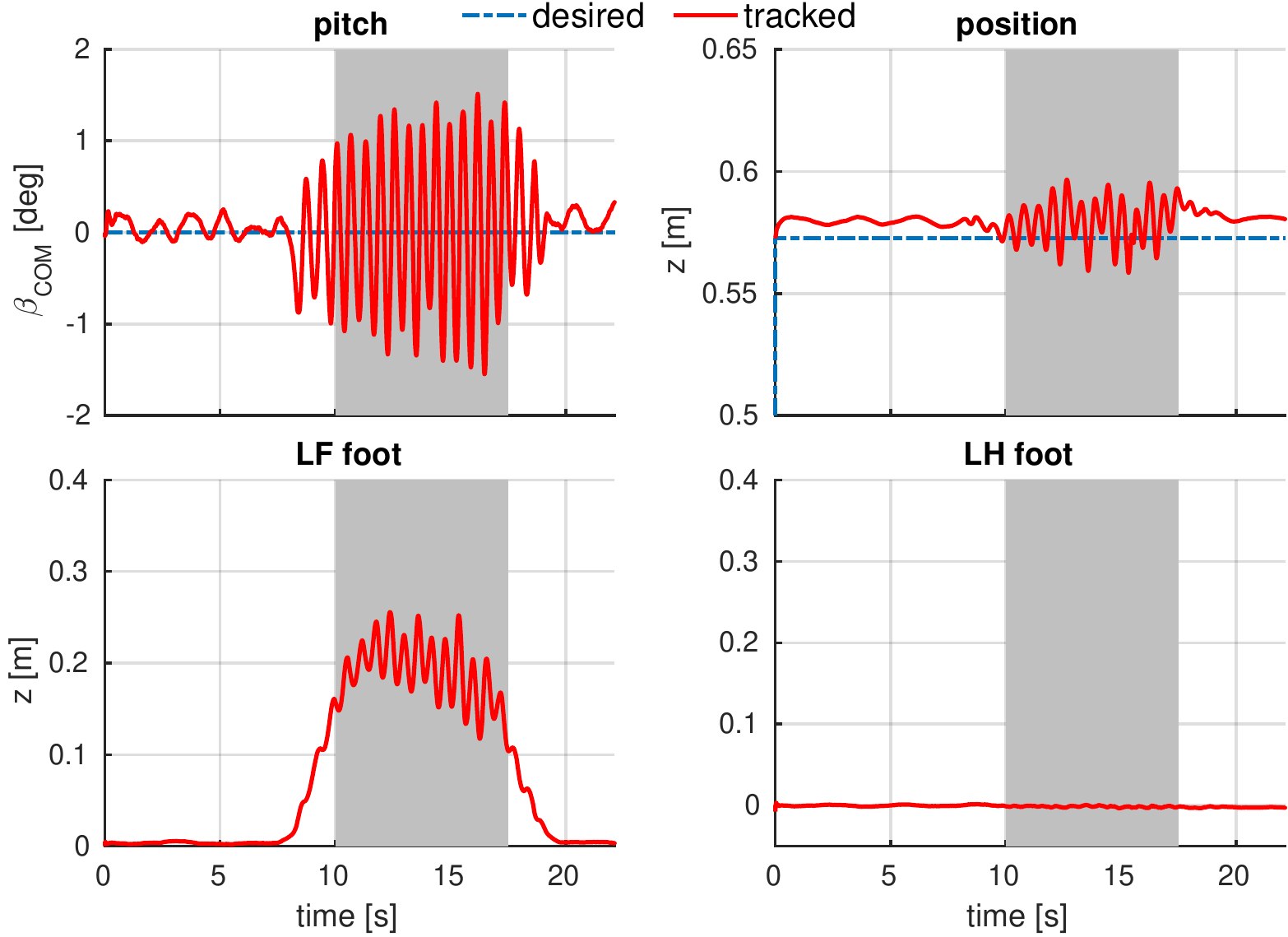}
    \caption{Position and pose tracking performance during the plank experiment. It can be seen that the robot curls its front legs and extends its hind legs to actively compensate for the disturbances. It manages to keep the pitching angle within 2 degrees deviation.}
    \label{fig:plank_exp_pos}
\end{figure}

\begin{figure}[tbp]
    \includegraphics[width=\columnwidth, height = 6cm]{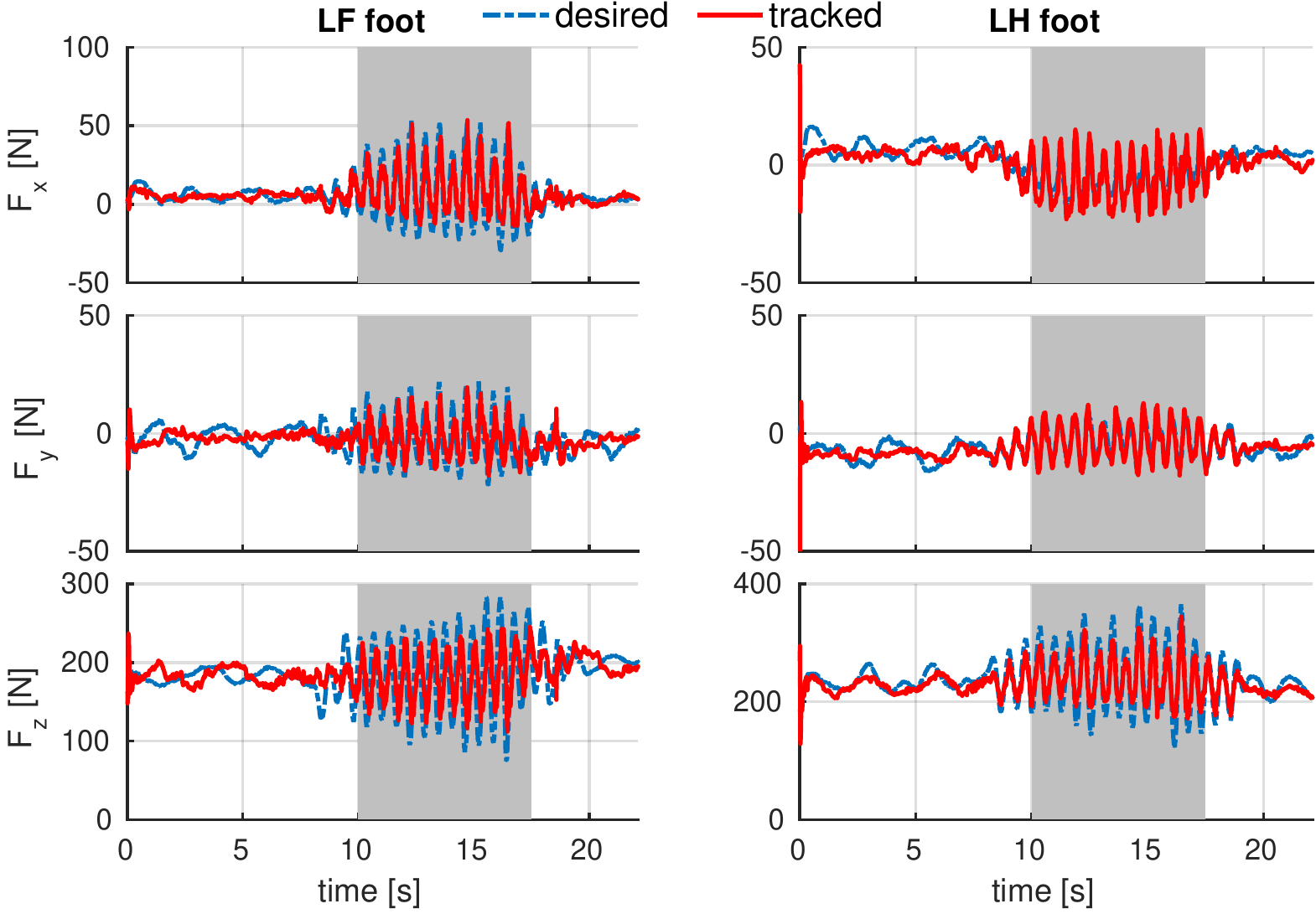}
    \caption{Tracking performance for the contact forces in $x$, $y$ and $z$ direction during the plank experiment. Besides changes in z direction, it can be seen that the controller also has to generate forces in $x$ and $y$ direction to compensate for the longitudinal and lateral displacement of the plank.}
    \label{fig:plank_exp_force}
\end{figure}

\begin{figure}[tbp]
    \includegraphics[width=\columnwidth,height=5.5cm]{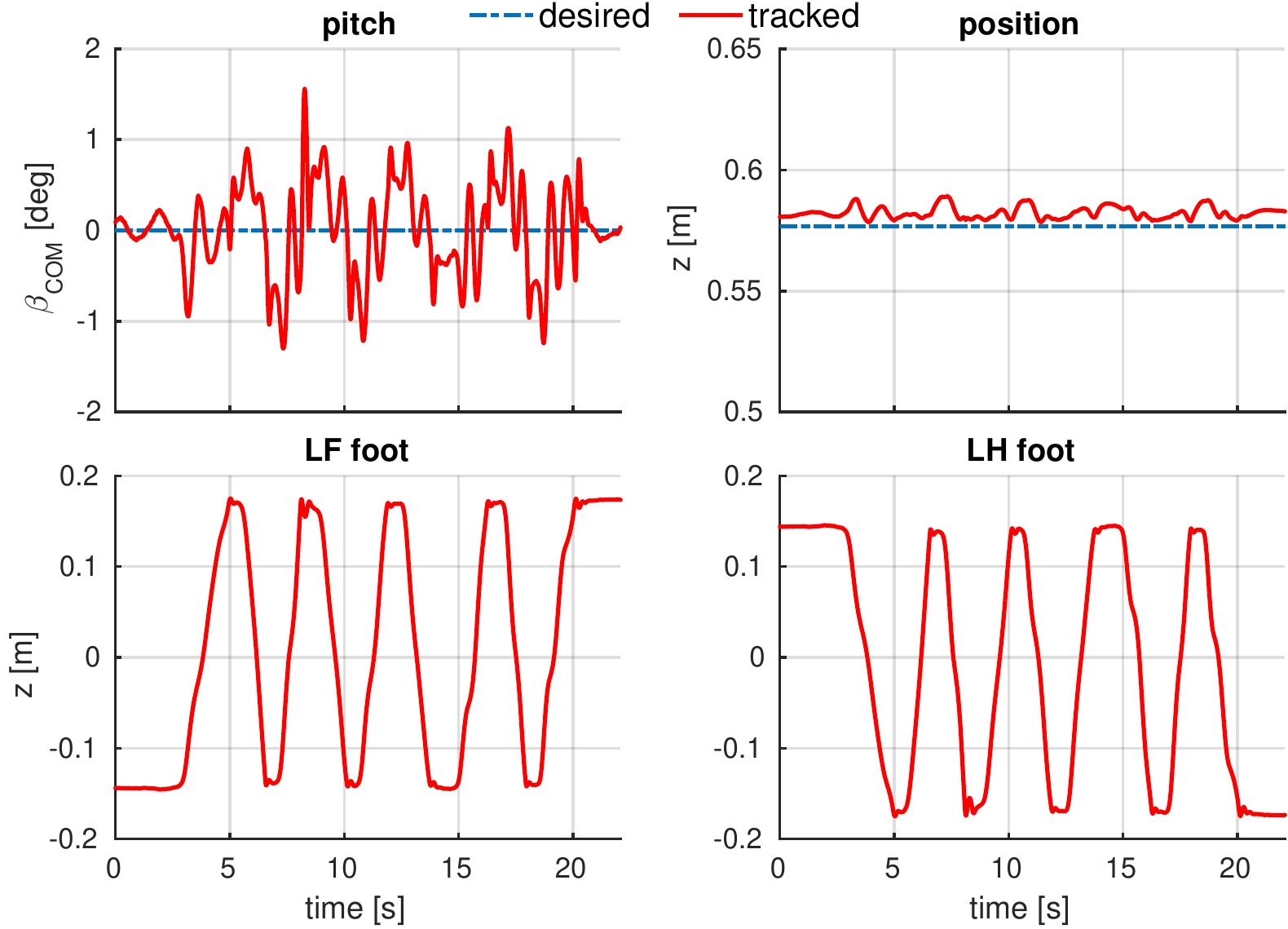}
    \caption{Position and pose tracking performance for the seesaw experiment. It can be seen that the robot curls front and hind legs in antiphase to keep the upright position of the base.}
    \label{fig:seesaw_exp_pos}
\end{figure}

\begin{figure}[tbp]
    \includegraphics[width=\columnwidth,height=6cm]{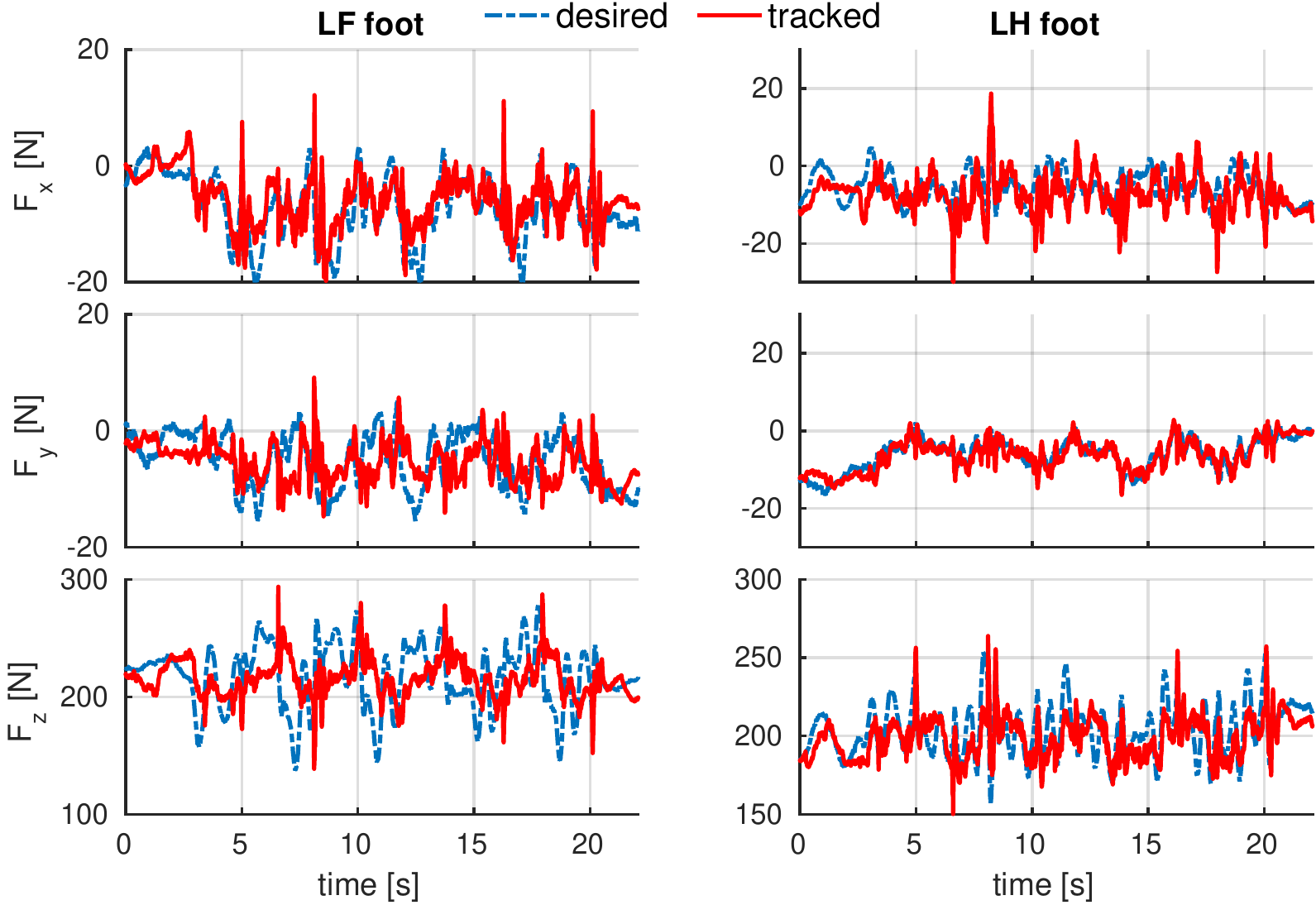}
    \caption{Tracking performance for the contact forces in $x$, $y$ and $z$ direction during the seesaw experiment. It can be seen that the changes in the $x$ and $y$ are not as salient as for the plank experiment, since no lateral or longitudinal displacement was introduced.}
    \label{fig:seesaw_exp_force}
\end{figure}

\end{document}